      \theoremstyle{plain}
      \newtheorem{assumption}{Assumption}
\newtheorem{theorem}{Theorem}
\newtheorem{remark}{Remark}
\newcommand{\removelatexerror}{\let\@latex@error\@gobble}
\begin{document}

\title{Data-Enabled Neighboring Extremal: Case Study on Model-Free Trajectory Tracking for Robotic Arm}

\author{Amin~Vahidi-Moghaddam, Keyi~Zhu, Kaixiang~Zhang, Ziyou~Song, and Zhaojian~Li$^*$
        % <-this % stops a space
\thanks{$^*$Zhaojian Li is the corresponding author.}
\thanks{This work was supported by the U.S. National Science Foundation
Award CMMI-2320698.}% <-this % stops a space
\thanks{Amin Vahidi-Moghaddam, Keyi Zhu, Kaixiang Zhang, and Zhaojian Li are with
the Department of Mechanical Engineering, Michigan State University, East
Lansing, MI 48824 USA (e-mail: vahidimo@msu.edu, zhukeyi1@msu.edu, zhangk64@msu.edu,
and lizhaoj1@egr.msu.edu).}
\thanks{Ziyou Song is with the Department of Electrical and Computer Engineering, University of Michigan, Ann Arbor, MI 48109 USA (e-mail: ziyou@umich.edu).}
}

% The paper headers
%\markboth{Journal of \LaTeX\ Class Files,~Vol.~14, No.~8, August~2021}%
%{Shell \MakeLowercase{\textit{et al.}}: A Sample Article Using IEEEtran.cls for IEEE Journals}

%\IEEEpubid{0000--0000/00\$00.00~\copyright~2021 IEEE}
% Remember, if you use this you must call \IEEEpubidadjcol in the second
% column for its text to clear the IEEEpubid mark.

\maketitle

\begin{abstract}
Data-enabled predictive control (DeePC) has recently emerged as a powerful data-driven approach for efficient system controls with constraints handling capabilities. It performs optimal controls by directly harnessing input-output (I/O) data, bypassing the process of explicit model identification that can be costly and time-consuming. However, its high computational complexity, driven by a large-scale optimization problem (typically in a higher dimension than its model-based counterpart--Model Predictive Control), hinders real-time applications. To overcome this limitation, we propose the data-enabled neighboring extremal (DeeNE) framework, which significantly reduces computational cost while preserving control performance. DeeNE leverages first-order optimality perturbation analysis to efficiently update a precomputed nominal DeePC solution in response to changes in initial conditions and reference trajectories. We validate its effectiveness on a 7-DoF KINOVA Gen3 robotic arm, demonstrating substantial computational savings and robust, data-driven control performance.
\end{abstract}

\begin{IEEEkeywords}
Nonlinear Optimal Control, Data-Enabled Predictive Control, Numerical Optimization Algorithm, Robotic Arms.
\end{IEEEkeywords}

\section{Introduction}
\IEEEPARstart{S}{afe} and high-precision trajectory tracking for autonomous systems has traditionally relied on model-based control synthesis, which depends on accurate dynamical models~\cite{liu2022mpc}. 
%play significant roles in tracking performance and motion speed \cite{liu2022mpc}. 
Model predictive control (MPC), for instance, has been widely successful in ensuring both tracking performance and system safety across various applications. However, its reliance on precise models and substantial onboard computational resources limits its broader adoption in autonomous systems~\cite{li2024identification}. This challenge is particularly pronounced for compliant or low-cost robots/vehicles performing high-precision tasks or operating in rapidly changing environments, such as construction sites where robots/vehicles are exposed to severe weather and working conditions \cite{zhu2018high}. 
To mitigate computational complexity, researchers have explored control policy learning approaches \cite{salzmann2023real,williams2017information,vahidi2023unified} and reduced-order modeling techniques \cite{zhang2023dimension,zamani2024data,ashtiani2022scalable}, though these methods often require extensive data collection or involve trade-offs between system performance and computational efficiency. 

On the other hand, data-driven optimal controllers have gained considerable attention in both academia and industry, as they eliminate the requirement for an explicit system model, which can be costly and time-consuming to develop. For example, in robot manipulation, deep learning techniques have been applied to learn robot kinematics, including both forward and inverse kinematics~\cite{sharkawy2022forward,singh2024application,toquica2021analytical}. Reinforcement learning and iterative learning control are also prominent methods for learning-based control in robotic manipulators~\cite{lu2024robust,anand2023model,ngo2024robust}. Gaussian Process Regression for robotics tasks has been outlined in \cite{calandra2016manifold} and extended to MPC for robotic manipulators~\cite{carron2019data}. To overcome the need for explicit models and reduce computational time, spatial-temporal filters have been used for MPC policy learning \cite{vahidi2023unified}. Recently, a new family of neural networks – neural ordinary differential equations (NODE) -  has emerged as an effective tool for extracting dynamic models from data \cite{chen2018neural}, as it approximates a continuous-depth neural network to directly model differential equations. This algorithm has been extended to a NODE-based MPC framework for aerial robots \cite{chee2022knode}. However, while data-driven controllers offer significant advantages, they often require extensive training data to achieve reliable control performance, and their lack of interpretability remains a key challenge.

Data-enabled predictive control (DeePC) has recently emerged as a new powerful data-driven optimal control approach, transitioning from traditional MPC to a model-free framework~\cite{coulson2019data}. By leveraging raw input/output (I/O) data, DeePC directly seeks an optimal control policy without requiring prior system identification. Building on the Fundamental Lemma \cite{willems2005note} and inspired by behavioral system theory \cite{willems1997introduction}, DeePC offers a novel approach to learning, predicting, and controlling system behavior by representing the subspace of the I/O trajectories as the column span of a data Hankel matrix. Regularization techniques and slack variables can be incorporated to address overfitting and infeasibility issues \cite{NL-DEEPC1,NL-DEEPC2}, which can arise from noisy data and nonlinear systems. In comparison with the aforementioned machine learning-based controllers, DeePC is less data hungry~\cite{coulson2019data}. More importantly, DeePC can explicitly handle constraints, which are crucial in many safety-critical real-world engineering systems.

Despite its promise, DeePC often suffers from high computational complexity due to the large dimensionality of the optimization variable, which is generally higher than that of its model-based MPC counterpart \cite{zhang2023dimension}. To address this challenge, this paper introduces a computationally efficient data-driven control framework, data-enabled neighboring extremal (DeeNE), which significantly enhances the computational efficiency of DeePC with minimum or no degradation in control performance.  First-order optimality perturbation analysis on a nominal DeePC solution is conducted to derive a feedback correction law that adapts to I/O perturbations. Compared to our previous conference version~\cite{vahidi2023data}, this work extends the DeeNE framework to incorporate real-time reference trajectory perturbations, enabling adaptive updates to the control policy for varying desired trajectories. Furthermore, the framework is validated through both simulation and experimental studies on a 7-DoF KINOVA Gen3 robotic arm, demonstrating substantial improvements in computational efficiency and constraint-handling capabilities in real-world applications.

%However, like MPC, DeePC suffers from high computational costs due to the dimension of its optimization variables. The main contribution of the paper is the development of a computationally efficient data-driven control framework, data-enabled neighboring extremal (DeeNE), to improve the performance of DeePC. %as a computationally efficient data-driven optimal control framework for the autonomous systems as well as its demonstration in hardware experiments to achieve both safe high-precision tracking and fast motion speed. Using perturbation analysis, we provide a numerical optimization algorithm, called , to 
%DeeNE corrects open-loop DeePC policies to improve control accuracy and approximates closed-loop DeePC policies to reduce computational time. This framework achieves both safe, high-precision tracking and fast computational speed without relying on parametric models. Simulation and experiments on the KINOVA Gen3 robotic arm provided to demonstrate the effectiveness of DeeNE.

The remainder of the paper is organized as follows: Section II introduces the problem formulation and reviews the DeePC basics. Section III presents the main results on the DeeNE framework. Simulation and experimental results are shown in Section IV. Finally, Section V concludes the paper.

\textbf{Notations}. We adopt the following notations across the paper. $\mathbb{R}^n$ and $\mathbb{R}^{n\times m}$ represent the set of $n$-dimensional real vectors and the set of $n\times m$-dimensional real matrices, respectively. $x^{\top}$ and $A^{\top}$ stand for the transpose of the vector $x$ and the matrix $A$, respectively. $\lVert \cdot \rVert$ denotes the Euclidean norm of a vector or the induced $2$-norm of a matrix. $I_n$ stands for the identity matrix with $n$-dimension. $J_g$, $J_{gg}$, $J_{gr}$, and $J_{rg}$ represent $\frac{\partial J}{ \partial g}$, $\frac{\partial^2 J }{ \partial g^2}$, $\frac{\partial^2 J}{\partial g \partial r}$, and $\frac{\partial^2 J}{ \partial r \partial g}$, respectively.

\section{Problem Formulation and Preliminaries}
In this section, we introduce the problem of reference tracking for nonlinear systems with constraints, followed by the model-based optimal control formulation, which is called model predictive control (MPC). We then present the data-driven predictive control formulation that bypasses the dynamic modeling process, which is called data-enabled predictive control (DeePC).

\subsection{Model-Based Reference Tracking}
Consider the following discrete-time nonlinear system:
\begin{equation}
  \begin{aligned}
    \label{system}
     &x(k+1) = f(x(k),u(k)),\\
     &y(k) = h(x(k),u(k)),
  \end{aligned}
\end{equation}
where $k\in\mathbb{N}^+$ denotes the time step, $x\in\mathbb{R}^n$ represents the state vector of the system, $u \in {\mathbb{R}^m}$ is the control input, and $y \in \mathbb{R}^p$ denotes the outputs of the system. Moreover, $f:\mathbb{R}^n\times \mathbb{R}^m \rightarrow \mathbb{R}^n$ is the system dynamics with $f(0,0)=0$, and $h:\mathbb{R}^n\times \mathbb{R}^m \rightarrow \mathbb{R}^p$ represents the output dynamics.

We consider the following safety constraint:
\begin{equation}
    \label{safety}
  C(y(k),u(k)) \leq 0,
\end{equation}
where $C:\mathbb{R}^p\times \mathbb{R}^m \rightarrow \mathbb{R}^l$ with $l$ denoting the total number of constraints in inputs and outputs. 

We consider a tracking control problem for the nonlinear system \eqref{system} with a desired output reference trajectory $r(k),k=0,1,2,\cdots$. Such a problem can be solved using a receding horizon optimal control, also known as model predictive control (MPC) \cite{liu2022mpc}, by minimizing the following cost term over a prescribed horizon of $N$ steps:
\begin{equation}
    \label{cost}
    \begin{aligned}
  &J_N(\mathbf{y},\mathbf{u};\mathbf{r}) = \sum^{N-1}_{k=0} \phi(y(k),u(k);r(k)),
  \end{aligned}
\end{equation}
where $\mathbf{u} = [ u(0),\, u(1),\, \cdots,\, u(N-1)  ]$, $\mathbf{y} = [ y(0),\, y(1),\, \cdots,\, y(N-1)  ]$, $\mathbf{r} = \left[ r(0),\, r(1),\, \cdots,\, r(N-1)  \right]$, and $\phi(y,u;r)$ is the stage cost, which generally includes the tracking error and the control efforts. 
Therefore, with an initial state $x_o$, the optimal tracking problem over $N$ steps can be reduced to the following constrained optimization problem:
\begin{equation}
  \begin{aligned}
    \label{NMPC}
    &(\mathbf{y}^{*},\mathbf{u}^{*}) = \underset{\mathbf{y},\mathbf{u}}{\arg\min} \hspace{1 mm} J_N(\mathbf{y},\mathbf{u};\mathbf{r})\\
    &s.t. \hspace{5 mm} x(k+1) = f(x(k),u(k)),\\
    & \hspace{10 mm} y(k) = h(x(k),u(k)),\\
    & \hspace{10 mm} C(y(k),u(k)) \leq 0,\quad x(0)=x_o.
  \end{aligned}
\end{equation}

In the MPC setting, only the first optimal control (i.e., $u(0)$) is executed, and the system evolves one step. The process is then repeated with the current state of the system as the new initial state for MPC. 
The above MPC formulation \eqref{NMPC} has been a celebrated and widely used control technique for trajectory tracking, due to its capability to enforce safety constraints during the control design. The key ingredient for this controller is an accurate parametric model of the system, but obtaining such a model, using plant modeling or identification procedures, is often the most time-consuming and costly part of control design. We next introduce its data-driven counterpart, DeePC, which is able to enforce safety constraints on the control design and performs similar predictive control without the need for an explicit dynamical model.

\subsection{Data-Enabled Predictive Control (DeePC)}
DeePC has recently emerged as a popular data-driven optimal control framework that has achieved numerous successes. It directly harnesses the raw input and output data for controls, eliminating the requirement of an explicit dynamic model as used in \eqref{NMPC}.  Building on the Fundamental Lemma \cite{willems2005note}, DeePC leverages a non-parametric representation of the dynamic system following the behavioral system theory \cite{willems1997introduction}. Specifically, Hankel matrices $\mathbb{H}_{T_{ini}+N}(u^d)$ and $\mathbb{H}_{T_{ini}+N}(y^d)$, where $T_{ini} + N$ represents the depth of the Hankel matrix, are first constructed from the offline collected input/output (I/O) samples $u^d$ and $y^d$ as:
\begin{equation}
\small
  \begin{aligned}
    \label{Hankel u}
    & \mathbb{H}_{T_{ini}+N}(u^d) = \begin{bmatrix}
    u^d_1 & u^d_2 & \cdots & u^d_{T-T_{ini}-N+1}\\ 
    u^d_2 & u^d_3 & \cdots & u^d_{T-T_{ini}-N+2}\\
    \vdots & \vdots & \ddots & \vdots\\ 
    u^d_{T_{ini}+N} & u^d_{T_{ini}+N+1} & \cdots & u^d_T
    \end{bmatrix},
  \end{aligned}
\end{equation}
where $T_{ini}$, $N$, and $T$ denote the length of the initial trajectory, prediction trajectory, and collected data, respectively. $\mathbb{H}_{T_{ini}+N}(u^d) \in \mathbb{R}^{m(T_{ini}+N) \times L}$, $L = T - T_{ini} - N + 1$, and $\mathbb{H}_{T_{ini}+N}(y^d) \in \mathbb{R}^{p(T_{ini}+N) \times L}$ is constructed in an analogous way from the collected samples $y^d$. It should be mentioned that persistency of excitation requirement \cite{coulson2019data} is generally needed for the signal $u^d$, which can be met if $\mathbb{H}_{T_{ini}+N+n}(u^d) \in \mathbb{R}^{m(T_{ini}+N+n) \times L}$ has full row rank.

One can partition the Hankel matrices into \textit{past} and \textit{future} sub-blocks as:
\begin{equation}
  \begin{aligned}
    \label{Partitioned Hankel}
    &\begin{bmatrix}
      U_P\\ U_F 
     \end{bmatrix} =: \mathbb{H}_{T_{ini}+N}(u^d), \hspace{5 mm}
     \begin{bmatrix}
      Y_P\\ Y_F 
     \end{bmatrix} =: \mathbb{H}_{T_{ini}+N}(y^d),
  \end{aligned}
\end{equation}
where $U_P \in \mathbb{R}^{mT_{ini} \times L}$, $U_F \in \mathbb{R}^{mN \times L}$, $Y_P \in \mathbb{R}^{pT_{ini} \times L}$, and $Y_F \in \mathbb{R}^{pN \times L}$. Now, DeePC aims at optimizing the system performance over $N$ future steps using only the I/O data \eqref{Partitioned Hankel}, which is presented as \cite{coulson2019data,huang2023robust}:
\begin{equation}
\centering
  \begin{aligned}
    \label{DeePC}
    &(\mathbf{y}^{*},\mathbf{u}^{*},\mathbf{\sigma_y}^{*},\mathbf{\sigma_u}^{*},\mathbf{g}^{*}) = \underset{\mathbf{y},\mathbf{u},\mathbf{\sigma_y},\mathbf{\sigma_u},\mathbf{g}}{\arg\min} \hspace{1 mm} J_N(\mathbf{y},\mathbf{u},\mathbf{\sigma_y},\mathbf{\sigma_u},\mathbf{g},\mathbf{r})\\
    &s.t. \hspace{5 mm} \begin{bmatrix}
      U_P\\ Y_P\\ U_F\\ Y_F 
     \end{bmatrix} g = 
     \begin{bmatrix}
      u_{ini}\\ y_{ini}\\ u\\ y 
     \end{bmatrix} +
     \begin{bmatrix}
      \sigma_u\\ \sigma_y\\ 0\\ 0 
     \end{bmatrix},\\
     &\hspace{11.5 mm} C(y,u) \leq 0,
  \end{aligned}
\end{equation}
where the equality constraint is a result of Fundamental lemma with $\sigma_u \in \mathbb{R}^{mT_{ini}}$ and $\sigma_y \in \mathbb{R}^{pT_{ini}}$ being auxiliary slack variables to model measurement noises and nonlinearities, and $(u_{ini}, y_{ini})$ is the given initial trajectory. Moreover, $J_N(\mathbf{y},\mathbf{u},\mathbf{\sigma_y},\mathbf{\sigma_u},\mathbf{g},r)$ is the modified cost function including two penalty terms for slack variables and also a regularization term to avoid overfitting caused by noisy data and/or system nonlinearity.

Using the identities $y = Y_F g$, $u = U_F g$, $\sigma_y = Y_P g - y_{ini}$, and $\sigma_u = U_P g - u_{ini}$, one can rewrite \eqref{DeePC} as:
\begin{equation}
  \begin{aligned}
    \label{DeePC g}
    &\mathbf{g}^{*} = \underset{\mathbf{g}}{\arg\min} \hspace{1 mm} J_N(Y_F\mathbf{g},U_F\mathbf{g},Y_P \mathbf{g} - y_{ini},U_P \mathbf{g} - u_{ini},\mathbf{g},\mathbf{r})\\
    &s.t. \hspace{5 mm} C(Y_F g,U_F g) \leq 0.
  \end{aligned}
\end{equation}

If the constraint $C(y,u)$ was absent in \eqref{DeePC}, the problem is referred to the unconstrained DeePC, and the solution is available in closed form $u = U_F g = K_d^r r + K_d^{ini} w_{ini}$, where $K_d^r \in \mathbb{R}^{mN \times pN}$ and $K_d^{ini} \in \mathbb{R}^{mN \times (m+p)T_{ini}}$ are control gains, and $r$ and $w_{ini} = \begin{bmatrix} u_{ini}^T, y_{ini}^T \end{bmatrix}^T$ are the given desired reference trajectory and initial trajectory. However, in the general case with system constraints, the DeePC-induced optimization \eqref{DeePC} generally suffers from high computational time since the dimension of the optimization variable $g$ is exceedingly high to ensure the persistency of excitation requirement \cite{zhang2023dimension}. In the next section, we present a computationally efficient DeePC framework to mitigate the computation complexity.

\section{Data-Enabled Neighboring Extremal (DeeNE)}
In this section, inspired by neighboring extremal framework \cite{ghaemi2009neighboring,ghaemi2010robust}, we propose a new DeePC framework, named data-enabled neighboring extremal (DeeNE), to significantly reduce computational cost with no or little performance degradation. Specifically, consider the nominal solution $(g^o,u^o,y^o)$ under the given initial I/O trajectory $w^o_{ini}$ and reference trajectory $r^o$ in the DeePC formulation~\eqref{DeePC}. Now for a new initial I/O trajectory $w_{ini}$ and/or reference trajectory $r$, instead of rerunning the computationally expensive optimization in ~\eqref{DeePC}, we seek an optimal  feedback policy for correcting the DeePC solution as $u^* = u^{o} + \delta u$ using perturbation analysis. The objective is now to present the derivation of the feedback gains, corresponding to the perturbations $\delta w_{ini}$ and $\delta r$, for the optimal correction policy $\delta u$. We will first consider the case of the nominal trajectory obtained by solving ~\eqref{DeePC} (i.e., optimal nominal trajectory) and then consider the case of a non-optimal nominal trajectory.

\subsection{DeeNE with Optimal Nominal Solution}
Following \eqref{DeePC g}, we first construct the following augmented cost function:
\begin{equation}
  \begin{aligned}
    \label{aug-cost}
  &\bar{J}_N (g,w_{ini},r,\mu) = J_N (g,w_{ini},r) + \mu^{T} C^a (g),
    \end{aligned}
\end{equation}
where $C^a (g)$ represents the active constraints at the nominal solution, and $\mu$ is the Lagrange multiplier vector associated with the active constraints. Let $(g^o,w^o_{ini},r^o)$ represent nominal solution of DeePC \eqref{DeePC g}, which must satisfy the following necessary optimality conditions (KKT conditions):
\begin{equation}
    \label{KKT DeePC g}
    \bar{J}_g (g,w_{ini},r,\mu) = 0, \qquad \mu \geq 0,
\end{equation}
where the subscript $g$ indicates the partial derivative $\partial / \partial g$; thus, $\bar{J}_g$ represents $\partial \bar{J}_N / \partial g$, where for simplicity, we show $\bar{J_N}_g$ as $\bar{J}_g$.

\begin{assumption}%[Active Constraints]
\label{ass2}
$C^a_{g}(g)$, i.e., $\partial C^a / \partial g$, is full row rank.
\end{assumption}

Substituting the nominal solution $(g^o,w^o_{ini},r^o)$ into the KKT conditions \eqref{KKT DeePC g} yields
\begin{equation}
  \begin{aligned}
    \label{Optimal KKT}
    &J_g (g^o,w^o_{ini},r^o) + \mu^{T} C^a_g (g^o) = 0.
  \end{aligned}
\end{equation}

The Lagrange multiplier can thus be obtained online as:
\begin{equation}
  \begin{aligned}
    \label{Lagrange multipliers}
    &\mu = -{(C^a_{g} {C^a_{g}}^T)}^{-1} C^a_{g} J^T_g.
  \end{aligned}
\end{equation}

Note that Assumption \ref{ass2} guarantees that $C^a_{g} {C^a_{g}}^T$ is invertible. Moreover, it is worth noting that $\mu = 0$ if the constraint $C (g^o)$ is not active. The Lagrange multiplier \eqref{Lagrange multipliers} is considered as the nominal Lagrange multiplier $\mu^{o}$.

Now, consider the nominal solution $(g^o,u^o,y^o,\mu^{o})$ under the given $w^o_{ini}$ and $r^o$. For a new initial I/O trajectory $w_{ini}=w^o_{ini}+\delta w_{ini}$ and a reference trajectory $r=r^o+\delta r$, with $\delta w_{ini}$ and $\delta r$ denoting deviations from nominal conditions, the objective is to develop the DeeNE framework which achieves $\delta u$ by minimizing the second-order variation of \eqref{aug-cost} subject to linearized constraints. More specifically, for the given $\delta w_{ini}$ and $\delta r$, DeeNE solves the following optimization problem:
\begin{equation}
  \begin{aligned}
    \label{DeeNE}
    &\mathbf{\delta g^{*}} = \underset{\mathbf{\delta g}}{\arg\min} \hspace{1 mm} {J}^{ne}_N\\ 
    & s.t. \hspace{5 mm} C^{a}_{g} \delta g = 0,
  \end{aligned}
\end{equation}
where 
\begin{equation}
  \begin{aligned}
    \label{DeeNE-cost}
  &{J}^{ne}_N(\delta g,\delta w_{ini},\delta r)=\delta^{2} \bar{J}_N(g^o,w^o_{ini},r^o) = \\
  &\frac{1}{2} 
  \begin{bmatrix}
  \delta g\\ \delta w_{ini}\\ \delta r
  \end{bmatrix}^{T}
  \begin{bmatrix}
  \bar{J}_{gg} & \bar{J}_{gw_{ini}} & \bar{J}_{gr}\\
  \bar{J}_{w_{ini}g} & \bar{J}_{w_{ini}w_{ini}} & \bar{J}_{w_{ini}r}\\
  \bar{J}_{rg} & \bar{J}_{rw_{ini}} & \bar{J}_{rr}
  \end{bmatrix}
  \begin{bmatrix}
  \delta g\\ \delta w_{ini}\\ \delta r
  \end{bmatrix}.
    \end{aligned}
\end{equation}
where the subscripts $gg$ and others represent the second-order partial derivatives.

For \eqref{DeeNE}, the augmented cost function is obtained as
\begin{equation}
  \begin{aligned}
    \label{DeeNE-aug-cost}
    &\bar{J}^{ne}_N(\delta g,\delta w_{ini},\delta r,\delta \mu) = {J}^{ne}_N(\delta g,\delta w_{ini},\delta r) + \delta \mu^{T} C^{a}_{g}(g) \delta g,
  \end{aligned}
\end{equation}
where $\delta \mu$ is the Lagrange multiplier of the optimization problem \eqref{DeeNE}. Applying the KKT conditions to \eqref{DeeNE-aug-cost}, one has
\begin{equation}
    \label{DeeNE-KKT}
    \bar{J}^{ne}_{\delta g} = 0, \qquad
    \delta \mu \geq 0.
\end{equation}
where $\bar{J}^{ne}_{\delta g}$ indicates $\partial \bar{J}^{ne}_{N} / \partial \delta g$.

\begin{theorem} [Data-Enabled Neighboring Extremal]
\label{theo1}
Consider the optimization problem \eqref{DeeNE}, the augmented cost function \eqref{DeeNE-aug-cost}, and the KKT conditions \eqref{DeeNE-KKT}. If $\bar{J}_{gg} > 0$, the DeeNE policy
\begin{equation}
  \begin{aligned}
    \label{law}
    &\delta g = K^{*}_1 \delta w_{ini} + K^{*}_2 \delta r,\\
    &K^{*}_1 = -
    \begin{bmatrix}
    I & 0
    \end{bmatrix}
    K^o
    \begin{bmatrix}
    \bar{J}_{gw_{ini}}\\
    {0}
    \end{bmatrix},\\
    &K^{*}_2 = -
    \begin{bmatrix}
    I & 0
    \end{bmatrix}
    K^o
    \begin{bmatrix}
    \bar{J}_{gr}\\
    0
    \end{bmatrix},\\
    &K^{o} =\begin{bmatrix}
    \bar{J}_{gg} & {C^a_{g}}^T\\
    C^a_{g} & 0
    \end{bmatrix}^{-1},
  \end{aligned}
\end{equation}
approximates the perturbed solution for the DeePC \eqref{DeePC} in the presence of initial I/O perturbation $\delta w_{ini}$ and reference perturbation $\delta r$.
\end{theorem}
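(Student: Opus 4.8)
The plan is to treat \eqref{DeeNE} as a linearly-constrained quadratic program in $\delta g$, solve it in closed form through its KKT system, and then read off the two feedback gains; the ``approximation'' claim is then obtained by identifying this closed-form solution with the first-order sensitivity of the DeePC optimizer.

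First I would expand the stationarity condition $\bar{J}^{ne}_{\delta g}=0$ from \eqref{DeeNE-KKT}. Using the quadratic form in \eqref{DeeNE-cost} together with the linear penalty $\delta\mu^{T}C^{a}_{g}(g^o)\delta g$ in \eqref{DeeNE-aug-cost}, and noting that in \eqref{DeePC} the constraint $C(y,u)=C(Y_F g,U_F g)$ depends on $g$ alone (so $C^a_g$ evaluated at $g^o$ is a constant matrix, $\bar{J}_{gg}=J_{gg}+(\mu^{o})^{T}C^a_{gg}$, and the mixed derivatives reduce to $\bar{J}_{gw_{ini}}=J_{gw_{ini}}$ and $\bar{J}_{gr}=J_{gr}$), the stationarity condition becomes the linear equation
\[
\bar{J}_{gg}\,\delta g+\bar{J}_{gw_{ini}}\,\delta w_{ini}+\bar{J}_{gr}\,\delta r+{C^a_{g}}^{T}\delta\mu=0 .
\]
Appending the primal feasibility $C^a_{g}\,\delta g=0$ from \eqref{DeeNE} yields the saddle-point system
\[
\begin{bmatrix}\bar{J}_{gg} & {C^a_{g}}^{T}\\ C^a_{g} & 0\end{bmatrix}\begin{bmatrix}\delta g\\ \delta\mu\end{bmatrix}=-\begin{bmatrix}\bar{J}_{gw_{ini}}\,\delta w_{ini}+\bar{J}_{gr}\,\delta r\\ 0\end{bmatrix}.
\]

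Next I would invert the bordered matrix. Since $\bar{J}_{gg}>0$ by hypothesis and $C^a_{g}$ has full row rank by Assumption \ref{ass2}, the standard saddle-point nonsingularity result applies: the matrix is invertible, and its inverse is exactly $K^o$ as defined in \eqref{law}. Solving gives $[\delta g^{T}\ \delta\mu^{T}]^{T}=-K^o[(\bar{J}_{gw_{ini}}\delta w_{ini}+\bar{J}_{gr}\delta r)^{T}\ 0]^{T}$; left-multiplying by $[I\ 0]$ extracts $\delta g$, and since the right-hand side is linear in $(\delta w_{ini},\delta r)$ the two contributions separate, producing precisely $K^*_1$ and $K^*_2$. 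Because $\bar{J}_{gg}>0$ holds on the whole space it is in particular positive definite on $\ker C^a_{g}$, so this stationary point is the unique global minimizer of the convex QP \eqref{DeeNE}, and one checks that the retained active set is consistent (equivalently, the sign condition $\delta\mu\ge 0$ in \eqref{DeeNE-KKT} certifies, to first order, that the active set does not change).

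Finally, to justify that $u^o+U_F\,\delta g$ ``approximates'' the perturbed DeePC control, I would apply the implicit function theorem to the KKT conditions \eqref{KKT DeePC g} of \eqref{DeePC g}, viewing $g^*$ as an implicit function of the parameters $(w_{ini},r)$ near the nominal point: invertibility of the same bordered matrix guarantees $g^*$ is differentiable there, and its sensitivity equations coincide with the linear system derived above, so $\delta g$ from \eqref{law} is exactly the first-order Taylor term of $g^*(w_{ini},r)$, giving $u^*=U_F g^* = u^o+U_F\,\delta g+o(\|\delta w_{ini}\|+\|\delta r\|)$. I expect the main obstacle to be (i) the nonsingularity argument for the bordered KKT matrix under $\bar{J}_{gg}>0$ and Assumption \ref{ass2}, and (ii) the careful bookkeeping showing that the dependence of $C^a$ on $g$ only is what makes the constraint-side perturbation terms vanish, so that the constraint in \eqref{DeeNE} is the homogeneous $C^a_g\delta g=0$; the remaining steps are routine linear algebra, with a secondary subtlety being to state precisely that ``approximates'' means a first-order sensitivity valid for a fixed active set, not an exact identity.
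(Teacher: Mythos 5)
Your proposal is correct and follows essentially the same route as the paper: write the stationarity condition $\bar{J}_{gg}\delta g+\bar{J}_{gw_{ini}}\delta w_{ini}+\bar{J}_{gr}\delta r+{C^a_g}^{T}\delta\mu=0$, append the linearized constraint $C^a_g\delta g=0$, invert the bordered KKT matrix $K^o$, and extract $\delta g$ with $[I\ 0]$. Your added justifications (nonsingularity of the saddle-point matrix from $\bar{J}_{gg}>0$ plus Assumption~\ref{ass2}, and the implicit-function-theorem reading of ``approximates'' as a first-order sensitivity for a fixed active set) go slightly beyond the paper's proof but do not change the argument.
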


\begin{proof}
Using \eqref{DeeNE-aug-cost} and the KKT conditions \eqref{DeeNE-KKT}, one has
\begin{equation}
  \begin{aligned}
    \label{J_g}
    &\bar{J}_{gg} \delta g + \bar{J}_{gw_{ini}} \delta w_{ini} + \bar{J}_{gr} \delta r + {C^a_{g}}^T \delta \mu = 0.
      \end{aligned}
\end{equation}
Now, using \eqref{J_g} and the linearized system constraints \eqref{DeeNE}, one has
\begin{equation}
  \begin{aligned}
    \label{g&mu}
    &\begin{bmatrix}
    \bar{J}_{gg} & {C^a_{g}}^T\\
    C^a_{g} & 0
    \end{bmatrix}
    \begin{bmatrix}
    \delta g\\
    \delta \mu
    \end{bmatrix}
    =
    - \begin{bmatrix}
    \bar{J}_{gw_{ini}} \\
    0
    \end{bmatrix}
    \delta w_{ini} -
    \begin{bmatrix}
    \bar{J}_{gr} \\
    0
    \end{bmatrix}
    \delta r,
  \end{aligned}
\end{equation}
which yields
\begin{equation}
  \begin{aligned}
    \label{g&mu 2}
    &\begin{bmatrix}
    \delta g\\
    \delta \mu
    \end{bmatrix}
    = -K^o
    \begin{bmatrix}
    \bar{J}_{gw_{ini}} \\
    0
    \end{bmatrix}
    \delta w_{ini} -K^o
    \begin{bmatrix}
    \bar{J}_{gr} \\
    0
    \end{bmatrix}
    \delta r.
  \end{aligned}
\end{equation}
Thus, the DeeNE policy \eqref{law} is obtained, and the proof is completed.
\end{proof}

\begin{remark} [Singularity]
\label{Singularity}
The assumption of $\bar{J}_{gg} > 0$ and Assumption \ref{ass2} are essential for DeeNE since the first one guarantees the convexity of \eqref{DeeNE}, and both guarantee a non-singular $K^o$ in \eqref{law}. If $C^a_{g}$ was not full row rank, the matrix $K^o$ would be singular, which leads to the failure of the proposed algorithm. This can be addressed using the constraint back-propagation algorithm \cite{ghaemi2008neighboring}.
\end{remark}

\begin{remark}
\label{input u}
Using the control policy \eqref{law}, one can obtain $g^* = g^o + \delta g$, then $u^* = u^o + \delta u$ is obtained using $u = U_F g$. Therefore, one can conclude that $\delta u = K_{ne}^r \delta r + K_{ne}^{ini} \delta w_{ini}$.
\end{remark}

\subsection{DeeNE with  Non-Optimal Nominal Solution}
In the previous part, DeeNE was derived under the assumption of an available nominal DeePC solution; thus, the nominal solution is optimal. In this subsection, we tackle the DeeNE policy for a nominal non-optimal solution so that at each time step, we can use the previous DeeNE solution as the nominal solution in our algorithm. For a nominal non-optimal solution $(g^o,w^o_{ini},r^o)$, we assume that it satisfies the constraints described in \eqref{DeePC g} but may not satisfy the optimality condition $\bar{J}_g (g^o,w_{ini}^o,r^o,\mu^o) = 0$. Under this circumstance, the cost function \eqref{DeeNE-cost} is modified for DeeNE as: 
\begin{equation}
\small
  \begin{aligned}
    \label{DeeNE-cost-large}
  &{J}^{ne}_N(\delta g,\delta w_{ini},\delta r)=\delta^{2} \bar{J}_N(g^o,w^o_{ini},r^o) + \delta \bar{J}_N(g^o,w^o_{ini},r^o)= \\
  &\frac{1}{2} 
  \begin{bmatrix}
  \delta g\\ \delta w_{ini}\\ \delta r
  \end{bmatrix}^{T}
  \begin{bmatrix}
  \bar{J}_{gg} & \bar{J}_{gw_{ini}} & \bar{J}_{gr}\\
  \bar{J}_{w_{ini}g} & \bar{J}_{w_{ini}w_{ini}} & \bar{J}_{w_{ini}r}\\
  \bar{J}_{rg} & \bar{J}_{rw_{ini}} & \bar{J}_{rr}
  \end{bmatrix}
  \begin{bmatrix}
  \delta g\\ \delta w_{ini}\\ \delta r
  \end{bmatrix} + \bar{J}^T_g \delta g.
    \end{aligned}
\end{equation}

Considering the optimal control problem \eqref{DeeNE} and the cost function \eqref{DeeNE-cost-large}, the augmented cost function is modified as:
\begin{equation}
  \begin{aligned}
    \label{DeeNE-aug-cost-large}
    &\bar{J}^{ne}_N(\delta g,\delta w_{ini},\delta r,\delta \mu) = \delta^{2} \bar{J}_N(g^o,w^o_{ini},r^o) + \delta \mu^{T} C^{a}_{g}(g) \delta g \\
    & \hspace{35 mm} + \bar{J}^T_g(g^o,w^o_{ini},r^o) \delta g.
  \end{aligned}
\end{equation}

Now, the following theorem is presented to modify the DeeNE policy for the nominal non-optimal solutions.

\begin{theorem} [Modified Data-Enabled Neighboring Extremal]
\label{theo2}
Consider the optimization problem \eqref{DeeNE}, the KKT conditions \eqref{DeeNE-KKT}, and the augmented cost function \eqref{DeeNE-aug-cost-large}. If $\bar{J}_{gg} > 0$, then the DeeNE policy is modified for a nominal non-optimal solution as
\begin{equation}
  \begin{aligned}
    \label{law2}
    &\delta g = K^{*}_1 \delta w_{ini} + K^{*}_2 \delta r + K^{*}_3 \begin{bmatrix}
    \bar{J}_{g}\\
    {0}
    \end{bmatrix},\\
    &K^{*}_3 = -
    \begin{bmatrix}
    I & 0
    \end{bmatrix}
    K^o,
  \end{aligned}
\end{equation}
where the gain matrices $K^{*}_1$, $K^{*}_2$, and $K^o$ are defined in \eqref{law}.
\end{theorem}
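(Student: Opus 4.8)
The plan is to mirror the proof of Theorem~\ref{theo1}, but now starting from the modified augmented cost function \eqref{DeeNE-aug-cost-large}, which carries the extra linear term $\bar{J}^T_g(g^o,w^o_{ini},r^o)\,\delta g$ reflecting that the nominal solution no longer satisfies the stationarity condition $\bar{J}_g=0$. First I would write out the KKT stationarity condition $\bar{J}^{ne}_{\delta g}=0$ for \eqref{DeeNE-aug-cost-large}. Since the new term is linear in $\delta g$, differentiating it contributes exactly $\bar{J}_g$ (as a column vector) to the gradient, so the analogue of \eqref{J_g} becomes
\begin{equation}
  \begin{aligned}
    &\bar{J}_{gg}\,\delta g + \bar{J}_{gw_{ini}}\,\delta w_{ini} + \bar{J}_{gr}\,\delta r + {C^a_{g}}^T \delta\mu + \bar{J}_g = 0.
  \end{aligned}
\end{equation}

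Next I would append the linearized active-constraint equation $C^a_g\,\delta g = 0$, exactly as in \eqref{DeeNE}, and stack the two relations into the same saddle-point system used in \eqref{g&mu}, now with $\bar{J}_g$ moved to the right-hand side:
\begin{equation}
  \begin{aligned}
    &\begin{bmatrix}
    \bar{J}_{gg} & {C^a_{g}}^T\\
    C^a_{g} & 0
    \end{bmatrix}
    \begin{bmatrix}
    \delta g\\ \delta\mu
    \end{bmatrix}
    = -\begin{bmatrix}\bar{J}_{gw_{ini}}\\ 0\end{bmatrix}\delta w_{ini}
      -\begin{bmatrix}\bar{J}_{gr}\\ 0\end{bmatrix}\delta r
      -\begin{bmatrix}\bar{J}_{g}\\ 0\end{bmatrix}.
  \end{aligned}
\end{equation}
Invoking $\bar{J}_{gg}>0$ together with Assumption~\ref{ass2} (so $K^o$ as defined in \eqref{law} exists, by the argument already given in Remark~\ref{Singularity}), I multiply through by $K^o$ and then left-multiply by $\begin{bmatrix} I & 0\end{bmatrix}$ to extract the $\delta g$ block. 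The first two terms reproduce $K^*_1\delta w_{ini}+K^*_2\delta r$ verbatim from Theorem~\ref{theo1}, and the third yields $K^*_3\begin{bmatrix}\bar{J}_g\\ 0\end{bmatrix}$ with $K^*_3=-\begin{bmatrix}I&0\end{bmatrix}K^o$, which is precisely \eqref{law2}. This is essentially bookkeeping once the first theorem is in hand.

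The only genuine subtlety — and the step I would treat most carefully — is justifying that the stationarity condition derived from \eqref{DeeNE-aug-cost-large} is the correct optimality condition, i.e., that $\delta^2\bar{J}_N$ in \eqref{DeeNE-cost-large} is evaluated as a quadratic form in $(\delta g,\delta w_{ini},\delta r)$ whose $\delta g$-gradient is $\bar{J}_{gg}\delta g+\bar{J}_{gw_{ini}}\delta w_{ini}+\bar{J}_{gr}\delta r$ (and not twice that), so that the factor of $\tfrac12$ in the quadratic term cancels correctly against the derivative while the linear term $\bar{J}^T_g\delta g$ differentiates to $\bar{J}_g$ without a factor. I would state this normalization explicitly so the coefficients in the stacked system are unambiguous; everything downstream is then a direct transcription of the Theorem~\ref{theo1} argument.
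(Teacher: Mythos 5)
Your proposal follows the paper's proof essentially verbatim: write the stationarity condition for the modified augmented cost (picking up the extra $\bar{J}_g$ term from the linear correction), stack it with the linearized active constraints into the same saddle-point system as in Theorem~\ref{theo1} with $\bar{J}_g$ moved to the right-hand side, and invert with $K^o$ to read off $K^*_1$, $K^*_2$, and $K^*_3$. Your added remark about checking the $\tfrac12$ normalization of the quadratic form is a sensible bit of care but does not change the argument.
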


\begin{proof}
Using the KKT conditions \eqref{DeeNE-KKT} and the modified augmented cost function \eqref{DeeNE-aug-cost-large}, one has
\begin{equation}
  \begin{aligned}
    \label{M_J_g}
    &\bar{J}_{gg} \delta g + \bar{J}_{gw_{ini}} \delta w_{ini} + \bar{J}_{gr} \delta r + {C^a_{g}}^T \delta \mu + \bar{J}_{g} = 0.
      \end{aligned}
\end{equation}
Now, using \eqref{M_J_g} and the linearized system constraints \eqref{DeeNE}, one has
\begin{equation}
  \begin{aligned}
    \label{m_g&mu}
    &\begin{bmatrix}
    \bar{J}_{gg} & {C^a_{g}}^T\\
    C^a_{g} & 0
    \end{bmatrix}
    \begin{bmatrix}
    \delta g\\
    \delta \mu
    \end{bmatrix}
    =
    - \begin{bmatrix}
    \bar{J}_{gw_{ini}} \\
    0
    \end{bmatrix}
    \delta w_{ini} -
    \begin{bmatrix}
    \bar{J}_{gr} \\
    0
    \end{bmatrix}
    \delta r - \begin{bmatrix}
    \bar{J}_{g}\\
    {0}
    \end{bmatrix},
  \end{aligned}
\end{equation}
which yields
\begin{equation}
  \begin{aligned}
    \label{m_g&mu 2}
    &\begin{bmatrix}
    \delta g\\
    \delta \mu
    \end{bmatrix}
    = -K^o
    \begin{bmatrix}
    \bar{J}_{gw_{ini}} \\
    0
    \end{bmatrix}
    \delta w_{ini} -K^o
    \begin{bmatrix}
    \bar{J}_{gr} \\
    0
    \end{bmatrix}
    \delta r - K^o
    \begin{bmatrix}
    \bar{J}_{g}\\
    {0}
    \end{bmatrix}.
  \end{aligned}
\end{equation}
Thus, the modified DeeNE policy \eqref{law2} is obtained, and the proof is completed.
\end{proof}

\begin{remark} [Quadratic Cost]
\label{Quad J}
One can consider a quadratic cost function $J_N(\mathbf{y},\mathbf{u},\mathbf{\sigma_y},\mathbf{\sigma_u},\mathbf{g},\mathbf{r})$ as:
\begin{equation}
  \begin{aligned}
    \label{cost deepc}
    & J_N(\mathbf{y},\mathbf{u},\mathbf{\sigma_y},\mathbf{\sigma_u},\mathbf{g},\mathbf{r}) = {\|y-r\|}_Q^2 + {\|u\|}_R^2 + \lambda_y {\|\sigma_y\|}_2^2 \\
    & \hspace{35 mm} + \lambda_u {\|\sigma_u\|}_2^2 + \lambda_g {\|g\|}_2^2,
  \end{aligned}
\end{equation}
where the positive semi-definite matrix $Q \in \mathbb{R}^{pN \times pN}$ and the positive definite matrix $R \in \mathbb{R}^{mN \times mN}$ are weighting matrices, and the positive parameters $\lambda_y, \lambda_u, \lambda_g \in \mathbb{R}$ are regularization weights. For the quadratic cost function \eqref{cost deepc}, the DeePC is a quadratic programming (QP) problem on the decision variable $g$, which requires an iterative QP solver. However, using DeeNE, one can have a computationally efficient solution to this optimization problem without requiring an iterative solver. For DeeNE framework, one has
\begin{equation}
  \begin{aligned}
      \label{Quad DeeNE}
    & \bar{J}_{g} = 2 ((Y_F g - r)^T Q Y_F + (U_F g)^T R U_F\\
    & \hspace{5 mm} + \lambda_y (Y_P g - y_{ini})^T Y_P + \lambda_u (U_P g - u_{ini})^T U_P + \lambda_g g^T),\\ 
    & \bar{J}_{gg} = 2 (Y^T_F Q Y_F + U^T_F R U_F + \lambda_y Y^T_P Y_P + \lambda_u U^T_P U_P + \lambda_g),\\
    & \bar{J}_{gw_{ini}} = - 2 (\lambda_y Y^T_P + \lambda_u U^T_P),\\
    & \bar{J}_{gr} = - 2 Y^T_F Q,\\
\end{aligned}
\end{equation}
where the requirement $\bar{J}_{gg}>0$ is satisfied.
\end{remark}

\subsection{DeeNE Implementation}
In this subsection, we present how DeeNE is implemented for efficient control of autonomous systems, which is summarized in Algorithm 1. Given the pre-collected Hankel matrices, an initial I/O trajectory $w_{ini}$ and the reference trajectory $r$, it uses DeePC~\eqref{DeePC g} for the time step $k = T_{ini}$ to generate an N-length solution. Then, the DeeNE framework applies the first $s$ control inputs $u^*(k:k+s)$, where $s\in[0:1:N]$ is a hyper-parameter that can be tuned. For the remaining $T_c-s$ steps, where $T_c$ is the simulation/experiment time, the optimal adaptation laws developed above are used to correct the control action through $\delta g$, based on the deviations $\delta w_{ini}$ and $\delta r$. Note that at the first correction step, i.e., $k = T_{ini} + s + 1$, Theorem 1 and Theorem 2 represent the same control policy having $\bar{J}_g (g^o,w_{ini}^o,r^o,\mu^o) = 0$ since we have a nominal DeePC solution. However, Theorem 2 needs to be used for the remaining steps to consider the nominal solution as the DeeNE solution from the previous step. This process then continues for the next steps. In particular, only one DeePC optimization is performed throughout the simulation/experiment time $T_c$, significantly reducing computational complexity compared to the traditional DeePC which requires solving a new optimization problem ~\eqref{DeePC g} with updated $w_{ini}$ and $r$ at each step. 

\begin{figure}
\removelatexerror
\scalebox{0.85}{
\begin{algorithm*}[H]
\SetAlFnt{\small}
    \SetKwInOut{Parameter}{Parameter}
    \SetKwInOut{Input}{Input}
    \SetKwInOut{Output}{Output}
\caption{Modified DeeNE}
\label{DeeNE Algorithm}
\SetAlgoLined
\Parameter{$U_P$, $Y_P$, $U_F$, $Y_F$, $C$, $Q$, $R$, $\lambda_y$, $\lambda_u$, $\lambda_g$.}
\Input{$w_{ini}(0:T_{ini}-1)$, $r(0:N-1)$.}
\Output{$\mathbf{u}(0:T_c)$, $\mathbf{y}(0:T_c)$.}
\vspace{0.2em}
\hrule
\vspace{0.2em}
\For{$k = T_{ini}$ : s : $T_c$}{
     \If{$k == T_{ini}$}{
        Compute $g^*$ using \eqref{DeePC g}\;
        $u^* = U_F g^*$\;
     }
     \Else{
        Calculate $\mu^{o}$ using \eqref{Lagrange multipliers}\;
        Calculate $K^*_1$, $K^*_2$, $K^*_3$ using \eqref{law} and \eqref{law2}\;
        $\delta w_{ini} = w_{ini} - w_{ini}^o$\; 
        $\delta r = r - r^o$\;
        Calculate $\delta g$ using \eqref{law2}\;
        $g^* = g^o + \delta g$\; 
        $u^* = U_F g^*$\;
     }
     Apply $u(k:k+s) = u^*(0:s)$, $0 \leq s \leq N-1$\;
     Measure $y(k:k+s)$ from the system\;
     $g^o = g^*$\; 
     $w_{ini}^o = w_{ini}$\; 
     $r^o = r$\;
     Update $w_{ini} = w(k+s-Tini+1:k+s)$\;
     Update $r(0:N-1)$\;
}
\end{algorithm*}}
\end{figure}

\begin{figure}[!h]
     \centering
     \includegraphics[width=0.99\linewidth]{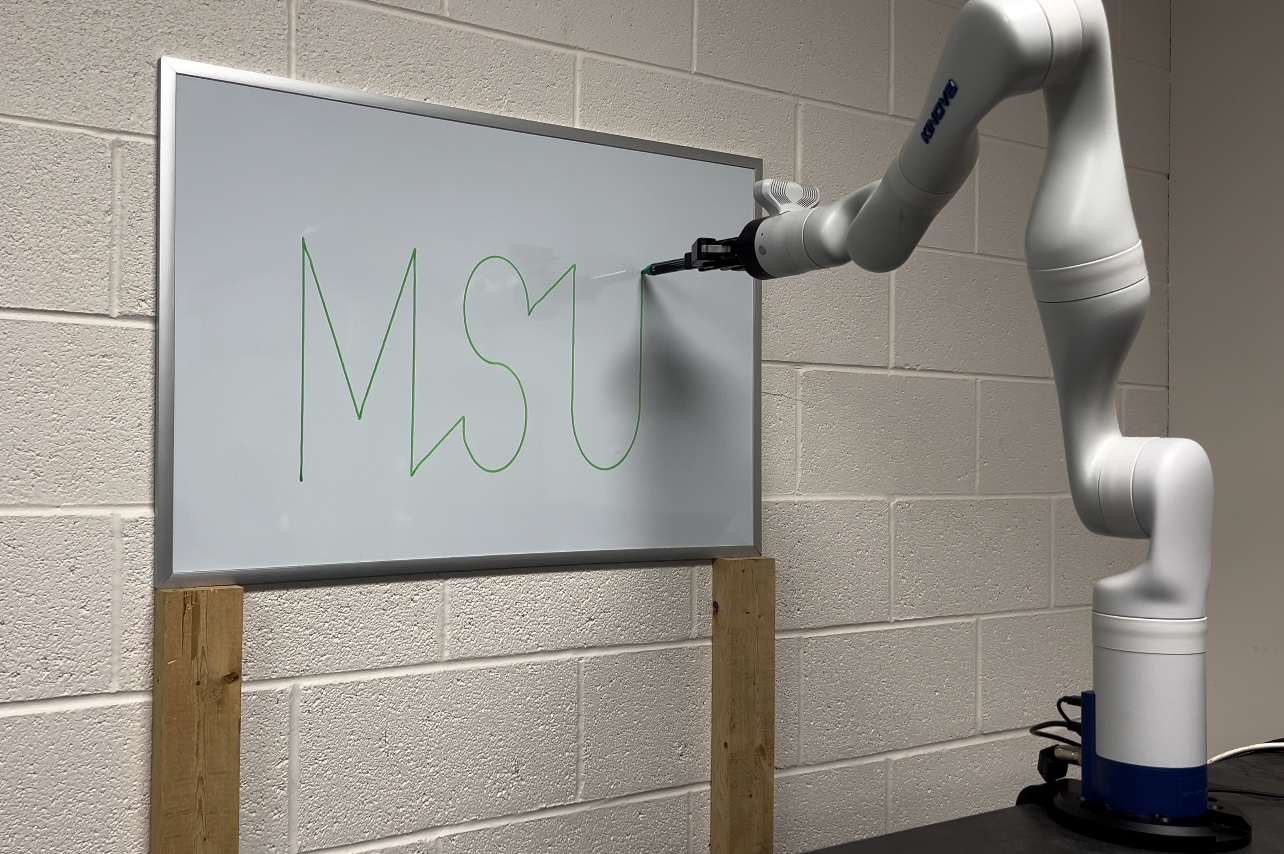}
    \caption{Drawing MSU by 7-DoF robotic arm.}
     \label{Robotic_Arms}
 \end{figure}

\section{Case Study on Robotic Arm Motion Control with Constraints}
\label{Sec5}
In this section, we present a case study of the developed DeeNE framework on the reference tracking of a  7-DoF robotic manipulator, KINOVA Gen3. As shown in Fig. \ref{Robotic_Arms}, the goal is to control the Robotic Arm to draw certain patterns such as ``MSU''. Note that forward and inverse kinematics of the Robotic Arm is needed to enable a model-based control for the desired pose tracking. However, deriving the forward/inverse kinematics of the 7-DoF robotic arm requires domain expertise, and extensive parameter calibrations may be needed to realize a high-precision model-based control. %However, for a given pose, the kinematics redundancy of the 7-DoF robotic arm leads to an infinite number of possible solutions for joint configurations, making it difficult to determine a unique or optimal configuration. 
%Other challenges include computational complexity, singularities, and smooth continuous motions without violating joint constraints. These issues make a real-time inverse kinematics computation particularly challenging for the 7-DoF robotic arms, often requiring the use of advanced numerical methods or optimization techniques.
As the robot's DoF increases, the complexity of deriving accurate forward and inverse kinematics relationships grows significantly. Consider a soft robotic arm, the model-based methods become increasingly cumbersome and computationally expensive as they require managing numerous joint interactions and potential singularities. This complexity can hinder the real-time performance of the robot in dynamic environments. To address these challenges, data-driven approaches such as DeePC or DeeNE is advantageous as they offer greater flexibility and adaptability in handling higher-DoF robots.
%Note that deriving an accurate first-principle model for the 7 DoF Robotic Arm requires domain expertise and extensive parameter calibrations may be needed to realize a high-precision model-based control. Therefore, the DeePC/DeeNE framework is advantegeous as it can eliminate the need for such modeling process. 

\subsection{System Specification and DeePC/DeeNE Formulation}
According to the manufacturer specifications and safety considerations, we consider the minimum and maximum values for the angular velocities (control inputs) of all seven joints as $[-\pi/6, \pi/6]
rad/s$. Limitations on the Cartesian position coverage are set as $[-0.9, 0.9] m$ in the 3D space. The inputs $u \in \mathbb{R}^7$ include the seven joint angular velocities while the output  $y \in \mathbb{R}^6$ comprises the pose of the robot that includes the 3D position of the end-effector, i.e., $d = [d_x, d_y, d_z]^T$, and the orientation of the end-effector in X-Y-Z Euler angles, i.e., $\theta = [\theta_x, \theta_y, \theta_z]^T$. To avoid a discontinuous behavior in the orientation part, we convert the 3D orientation to 4D orientation using Quaternions \cite{ozgur2016kinematic}. 

The protocol of data collection is as follows. We collected data from the 7-DoF robotic arm for $50$ trajectories with $T_i = 100$ data points on each trajectory and the sampling time $T_s = 0.1 s$. It is worth noting that since we are generating the Hankel matrix using multiple signal trajectories, called mosaic-Hankel matrix (a Hankel matrix with discontinuous signal trajectories), the number of data points on each trajectory must be greater than the depth of the Hankel matrix, i.e. $T_i > T_{ini} + N$ \cite{vahidi2024online}. For each trajectory, the initial joint angles and the inputs are chosen randomly according to a uniform probability distribution. Due to the setup condition in the lab (desk structure, wall position, etc.), we had to stop the robot if it was close to hit an object, ignore that trajectory, and continue the data collection with another initial position and/or input values. 

Details of DeePC are as follows. The reference trajectory $r(k) \in \mathbb{R}^7$ represents the desired values for the pose of the robot. According to the quadratic cost function, the matrices $Q = 5 \times 10^4 \times I_{pN}$ and $R = 1 \times 10^2 \times I_{mN}$ are considered to penalize the tracking error and control input amplitude, respectively. The slack variables $\lambda_y, \lambda_u = 5 \times 10^5$ are used to ensure the feasibility of the optimal control problem. The regularization parameter $\lambda_g = 5 \times 10^2$ avoids the overfitting issue due to the collected noisy data. Finally, the initial trajectory and the prediction lengths are $T_{ini} = 35$ and $N = 20$, respectively. Since we have $u \in \mathbb{R}^7$ and $y \in \mathbb{R}^7$, the dimension of the mosaic-Hankel matrix is $\mathbb{H}(u^d,y^d) \in \mathbb{R}^{770 \times 2300}$ causing high computational time for applying a real-time DeePC on the 7-DoF robotic arm. For DeePC, we use the DeePC policy \eqref{DeePC}, apply the first $s$ optimal control input $u(k:k+s)$ to the 7-DoF robotic arm, measure the pose of the robot, and update the initial trajectory $w_{ini}$ and the reference trajectory $r$ for the next step (See Algorithm 2 in \cite{coulson2019data}). For an initial pose of the robotic arm, we generate the first initial trajectory $(u_{ini},y_{ini})$ using random control inputs, i.e. $(u(0:34),y(0:34))$. For a tracking performance index, we use Root Mean Square Error (RMSE) between the desired reference trajectory and the pose of the robotic arm over the entire trajectory.

\subsection{Simulation Results}
In this subsection, we conduct simulation studies to compare the performance of the proposed DeeNE framework with DeePC by evaluating the tracking performance and computational time for different open-loop control scenarios, i.e., under various lengths $s$. For this part, we use the forward kinematics model of the 7-DoF robotic arm (shared by the arm manufacturer) to evaluate the performance of the control schemes in simulations. The reference trajectory $r(k)$ is chosen as a sinusoidal trajectory with $300$ data points for the pose of the end-effector. For the desired reference trajectory $r(k)$, DeePC and DeeNE both seek to accomplish the reference tracking task. For this case, we use DeeNE policy \eqref{law2} to correct the open-loop DeePC solution at each step so as to reduce the computational time. The tracking performance and the computational time are examined for DeePC and DeeNE frameworks under different open-loop control lengths $s$. Fig. \ref{Inputs_Sim} compares the control input for the open-loop control scenario $s=20$, which illustrates how DeeNE corrects the open-loop DeePC control sequences. Figs. \ref{Outputs3D_Sim} and \ref{Outputs_Sim} show the position and orientation tracking performance, respectively, where one can see that the open-loop DeePC does not track the reference trajectory for $s = 20$; however, DeeNE tracks the reference very well, thanks to the efficient adaptation scheme detailed in Algorithm 1. This shows the necessity of the online correction. Furthermore, Table I summarizes the tracking performance and the computational time indices for both control algorithms under $s = 0$, $s = 10$, and $s = 20$. One can see that both controllers perform similarly on the tracking performance for $s = 0$, corresponding to the case that optimization is performed every step for DeePC, but DeeNE provides significantly lower computational time. However, as we increase $s$, the length of open-loop control applications, the performance of DeePC goes down since no adaptation is utilized and it predicts the behavior of the system using the last available initial and reference trajectories. On the other hand, DeeNE takes feedback from the system and updates the initial and reference trajectories at each time step, which corrects the DeePC predictions. Consequently, one can see that DeeNE enables both high-precision tracking performance and faster motion speed for the 7-DoF robotic arm.

 \begin{figure}[!h]
     \centering
     \includegraphics[width=0.99\linewidth]{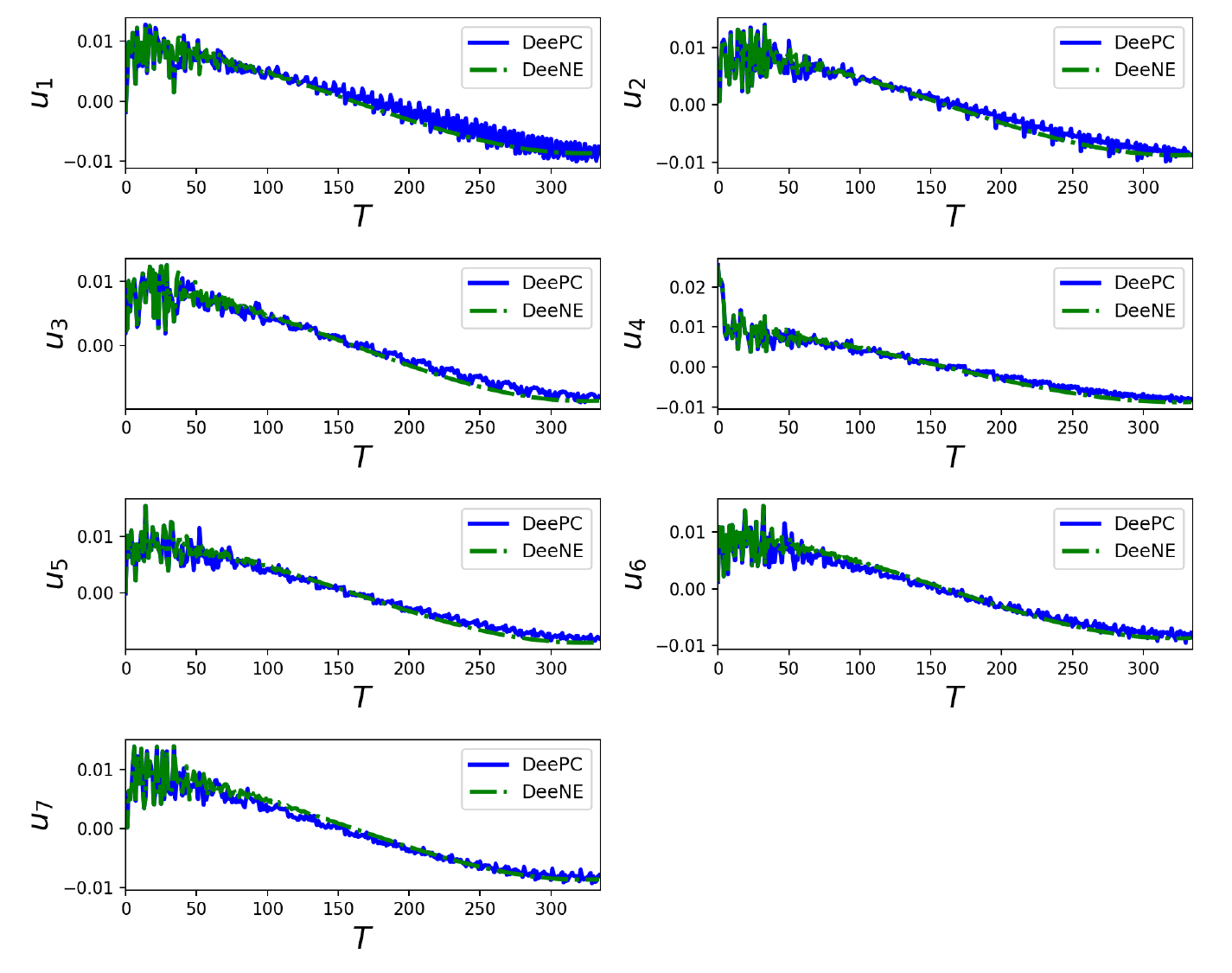}
    \caption{Control input for 7-DoF Robotic Arm (Simulation).}
     \label{Inputs_Sim}
 \end{figure}
 
 \begin{figure}[!h]
     \centering
     \includegraphics[width=0.99\linewidth]{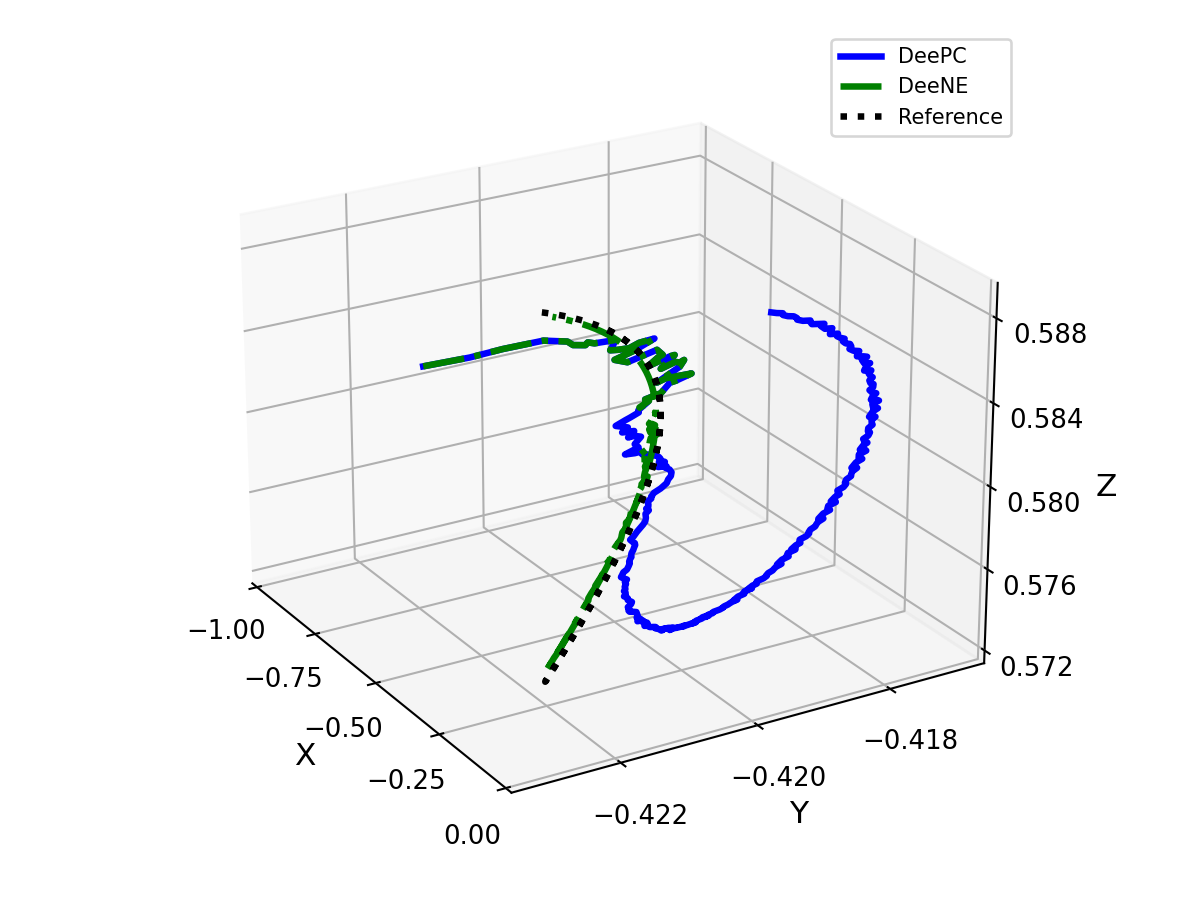}
    \caption{Position tracking for 7-DoF Robotic Arm (Simulation).}
     \label{Outputs3D_Sim}
 \end{figure}

\begin{figure}[!h]
     \centering
     \includegraphics[width=0.99\linewidth]{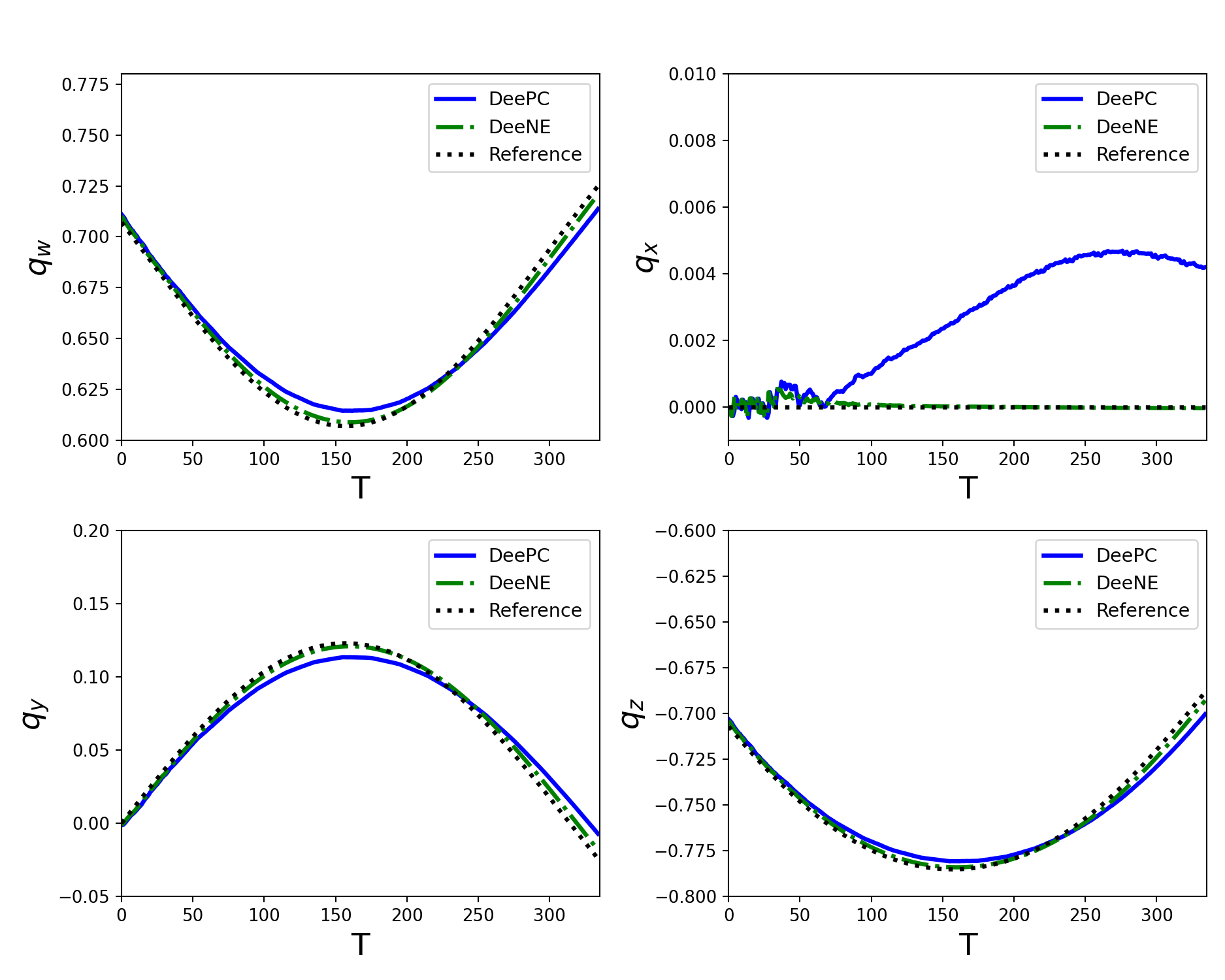}
    \caption{Orientation tracking for 7-DoF Robotic Arm (Simulation).}
     \label{Outputs_Sim}
 \end{figure}
 
\vspace{-5 pt}
\begin{table}[!ht]
\centering
 \caption{Comparison of Performance and Computational Time for DeePC and DeeNE with different open-loop control scenarios}
\begin{tabular}{ |p{2.3cm}|p{2.3cm}|p{2.3cm}|  }
\hline
\hline
Controller & RMSE & Time (per loop) \\
\hline
DeePC (s = 0)  & $0.23 \hspace{1 mm} cm$ & $20.02 \hspace{1 mm} ms$ \\\hline
DeePC (s = 10) & $0.48 \hspace{1 mm} cm$ & $2.04 \hspace{1 mm} ms$ \\\hline
DeePC (s = 20) & $0.51 \hspace{1 mm} cm$ & $1.14 \hspace{1 mm} ms$ \\\hline
DeeNE (s = 0)  & $0.24 \hspace{1 mm} cm$ & $3.03 \hspace{1 mm} ms$ \\\hline
DeeNE (s = 10) & $0.27 \hspace{1 mm} cm$ & $0.39 \hspace{1 mm} ms$ \\\hline
DeeNE (s = 20) & $0.32 \hspace{1 mm} cm$ & $0.25 \hspace{1 mm} ms$ \\\hline
\hline
\end{tabular}
\end{table}

\subsection{Experimental Results}
In this part, we apply both DeePC and DeeNE on the real 7-DoF robotic arm for a closed-loop control scenario (i.e., $s = 0$) to ensure the safety and stability of the robot under DeePC. We consider the task of using the robot to draw on a board, with the target reference chosen as ``MSU'', the abbreviation of Michigan State University. For the orientation of the end effector, we consider the $0.5$ degree as the desired orientation to show the performance of the controllers for orientation tracking and also to avoid the rotation of the marker. For the desired reference trajectory $r(k)$, DeePC and DeeNE must simultaneously accomplish reference tracking and setpoint control tasks for the position and orientation of the end-effector, respectively. Similar to the simulations, we use DeeNE policy \eqref{law2} to avoid solving the DeePC problem at each time step and thus reduce the computational time,  providing a much faster motion control sampling rate for the robot. Fig. \ref{Inputs} compares the control inputs generated by both control algorithms, which illustrates the effectiveness of DeeNE on the approximation of the DeePC policy. Figs. \ref{Outputs3D} and \ref{Outputs} show the position and orientation tracking performance, respectively, where one can see that both controllers track the reference trajectory very well. Table II lists the tracking performance and the computational time indices for both control algorithms with $s=0$, showing similar tracking performance but that DeeNE has much lower computational time. It is worth noting that since the computational time of DeePC, i.e., $0.2 s$, is higher than the sampling time of the robot hardware, i.e., $0.1 s$, the robotic arm receives the response of DeePC for the first $0.1 s$, waits for the second $0.1 s$ until receiving the updated response of DeePC, and then the process is repeated using the updated control input, leading to the slow and discontinuous motion of the robot. It can be seen that  DeeNE effectively achieves both high-precision tracking performance and faster motion speed for the 7-DoF robotic arm. %Moreover, we cannot stop the system until receiving the updated response of DeePC for safety-critical scenarios since it may cause an accident for the robots/autonomous vehicles.

 \begin{figure}[!h]
     \centering
     \includegraphics[width=0.99\linewidth]{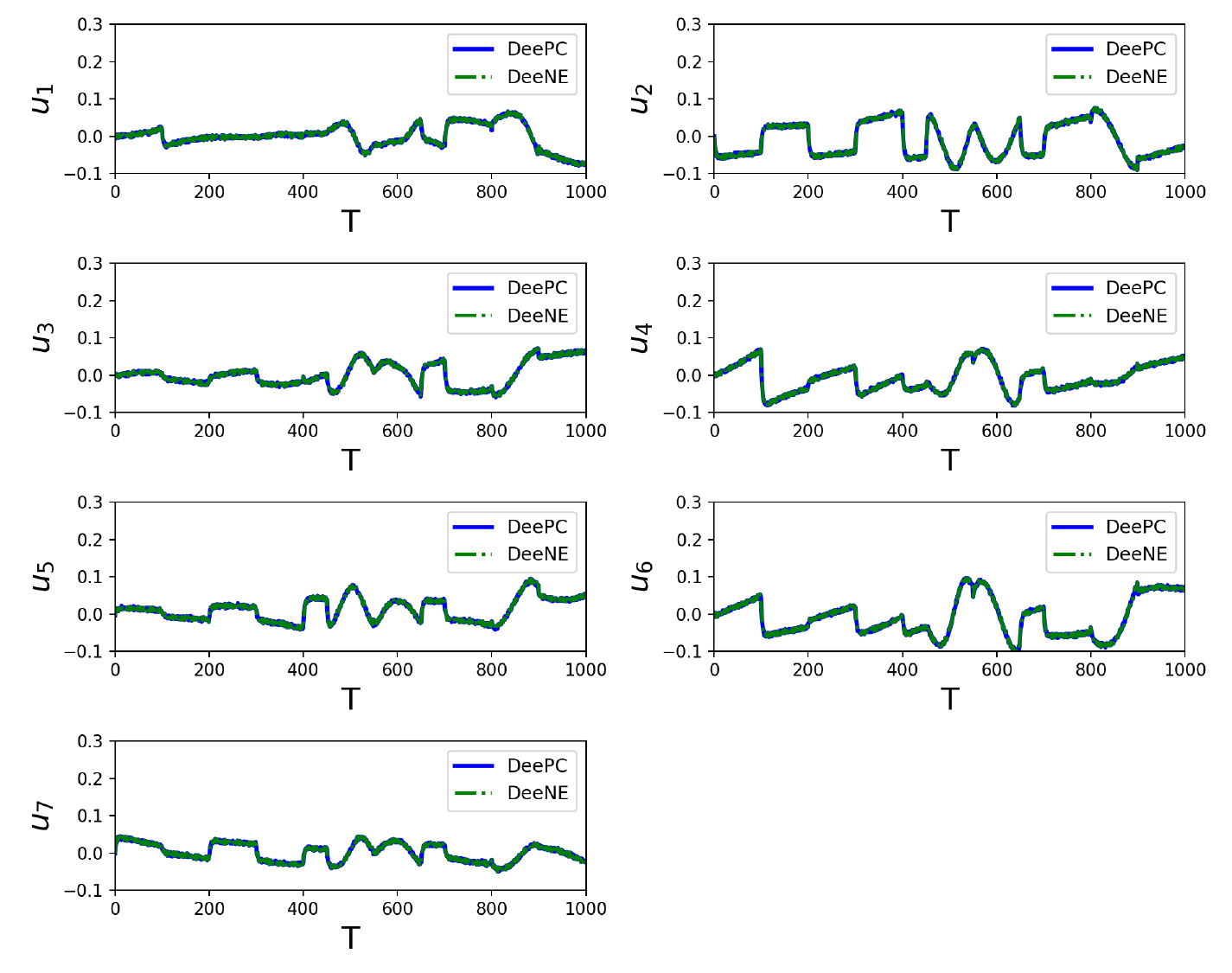}
    \caption{Control input for 7-DoF Robotic Arm (Experiment).}
     \label{Inputs}
 \end{figure}
 
 \begin{figure}[!h]
     \centering
     \includegraphics[width=0.99\linewidth]{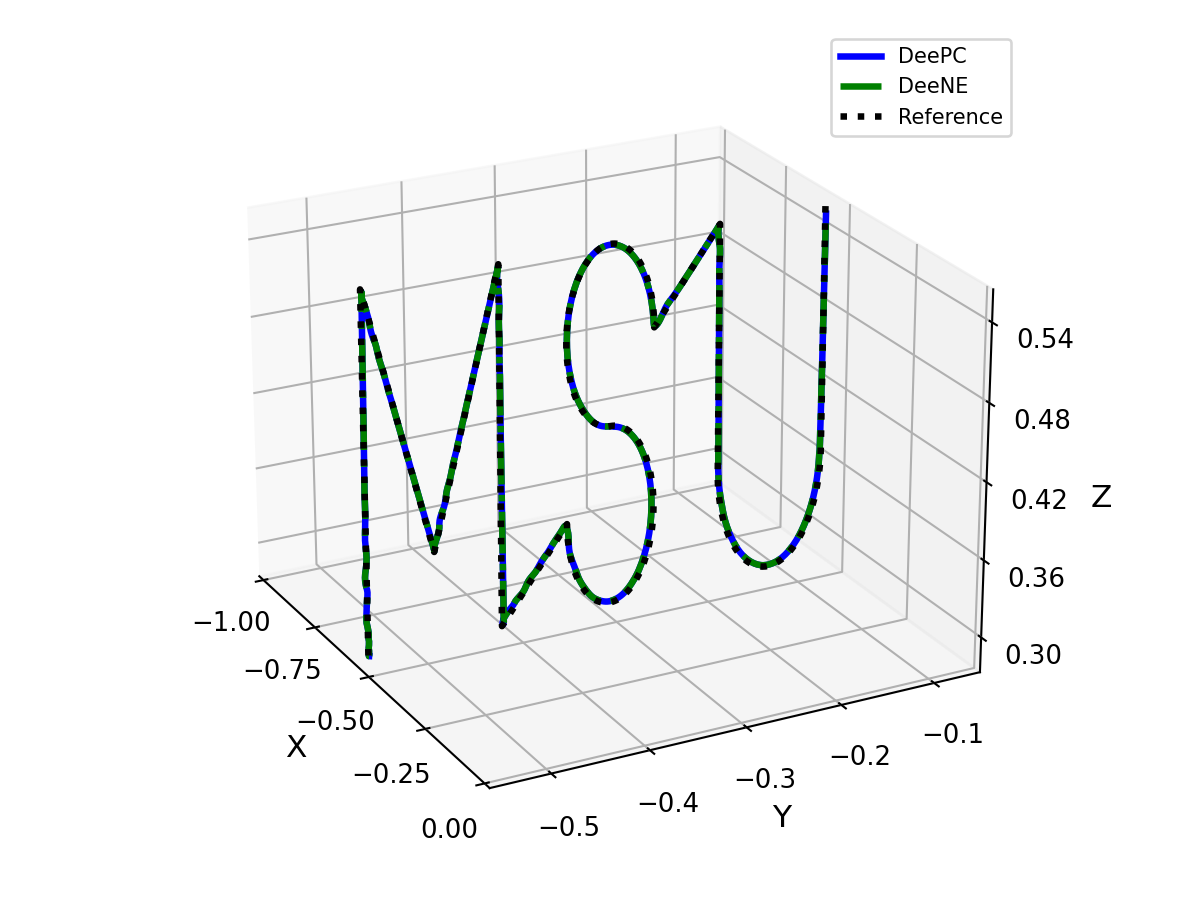}
    \caption{Position tracking for 7-DoF Robotic Arm (Experiment).}
     \label{Outputs3D}
 \end{figure}

\begin{figure}[!h]
     \centering
     \includegraphics[width=0.99\linewidth]{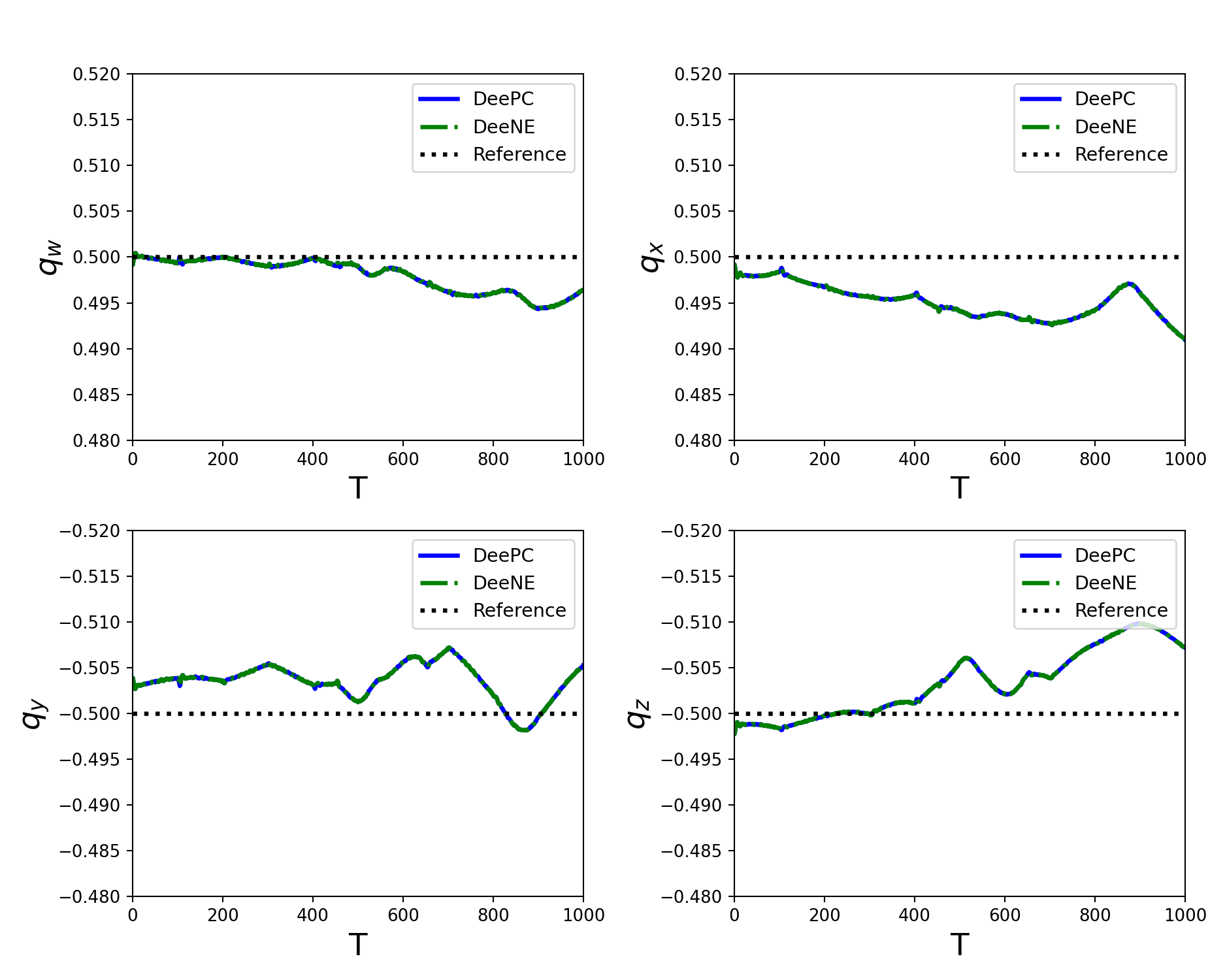}
    \caption{Orientation tracking for 7-DoF Robotic Arm with a y-axis scale increment of 0.005 (Experiment).}
     \label{Outputs}
 \end{figure}

\vspace{-5 pt}
\begin{table}[!ht]
\centering
 \caption{Comparison of Performance and Computational Time for DeePC and DeeNE}
\begin{tabular}{ |p{2.3cm}|p{2.3cm}|p{2.3cm}|  }
\hline
\hline
Controller & RMSE & Time (per loop) \\
\hline
DeePC & $1.42 \hspace{1 mm} cm$ & $20.05 \hspace{1 mm} ms$ \\\hline
%SVD-based DeePC & $22.1485$ & $0.06 \hspace{1 mm} s$ \\\hline
DeeNE & $1.43 \hspace{1 mm} cm$ & $3.07 \hspace{1 mm} ms$ \\\hline
%SVD-based DeeNE & $23.5335$ & $0.01 \hspace{1 mm} s$ \\\hline
\hline
\end{tabular}
\end{table}

We next verify the performance of the control algorithms under safety constraints, where the robotic arm must avoid unsafe regions such as dynamic obstacles. In this setting, the robot must track the same reference trajectory ``MSU''; however, we consider an unsafe region illustrated as a red box on the top part of ``s''. This setup addresses, for example, cases where the reference trajectory is obtained offline using path planning or by a remote operator, but the controller must avoid unsafe regions due to dynamic obstacles. In this case, we do not use online path planning to update the reference trajectory and avoid the dynamic obstacles. As shown in Figs. \ref{Inputs_Safe}-\ref{Outputs_Safe}, both DeePC and DeeNE can satisfy the safety constraints and track the reference trajectory well in its best capacity. However, similar to the previous tasks and as shown in Table III, the computational time of DeePC is higher than the sampling time, making it impractical for real-time implementations. On the other hand, DeeNE renders a much faster computation time while achieving very similar performance, demonstrating its efficacy. %computes the optimal control input for the updated initial and reference trajectories; therefore, it can avoid the dynamic obstacles well. 
The demo video of the experiments can be found at the following link \url{https://www.youtube.com/watch?v=BlKTUgkAMVo}.    

 \begin{figure}[!h]
     \centering
     \includegraphics[width=0.99\linewidth]{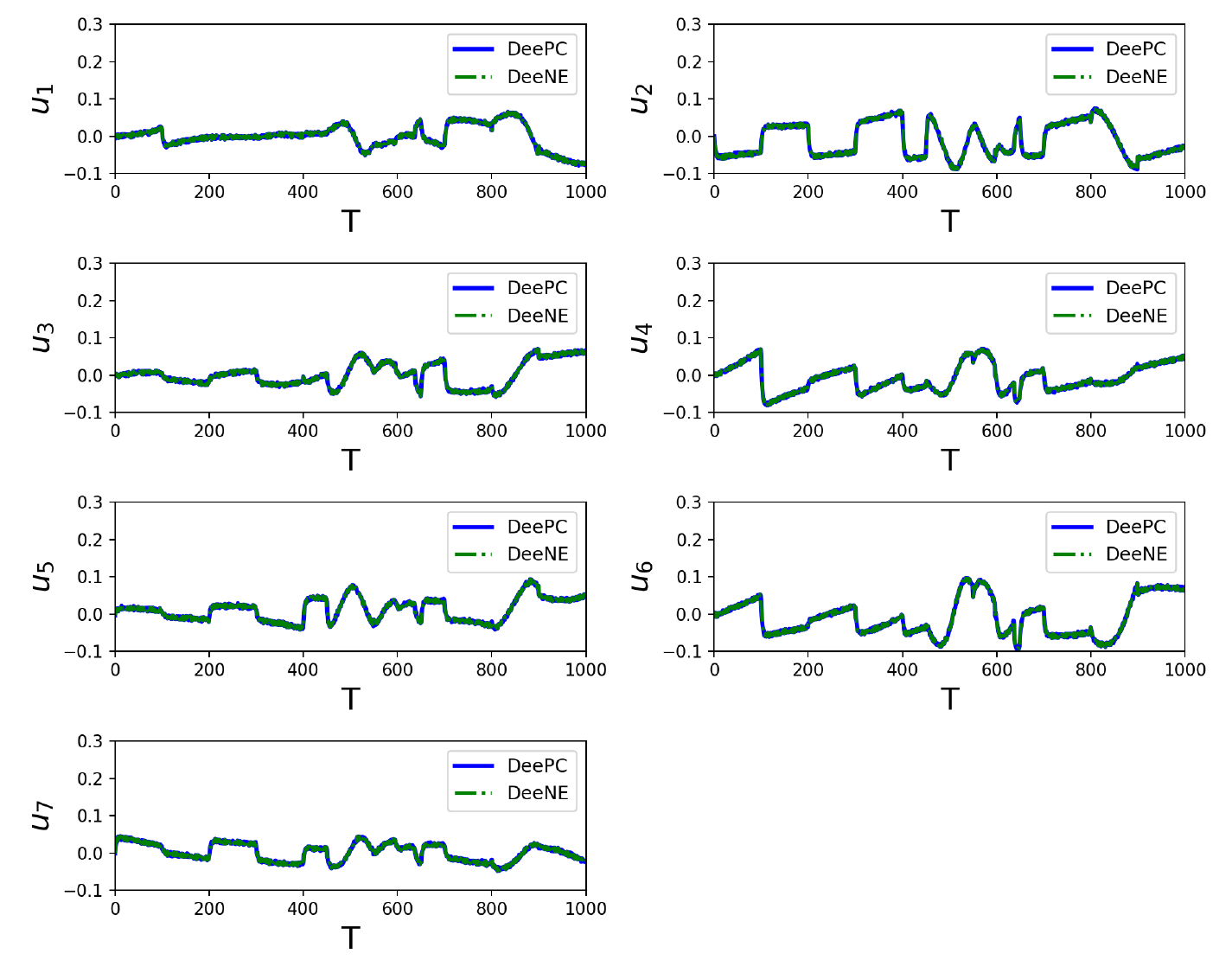}
    \caption{Safe control input for 7-DoF Robotic Arm (Experiment).}
     \label{Inputs_Safe}
 \end{figure}

\begin{figure}[!h]
     \centering
     \includegraphics[width=0.99\linewidth]{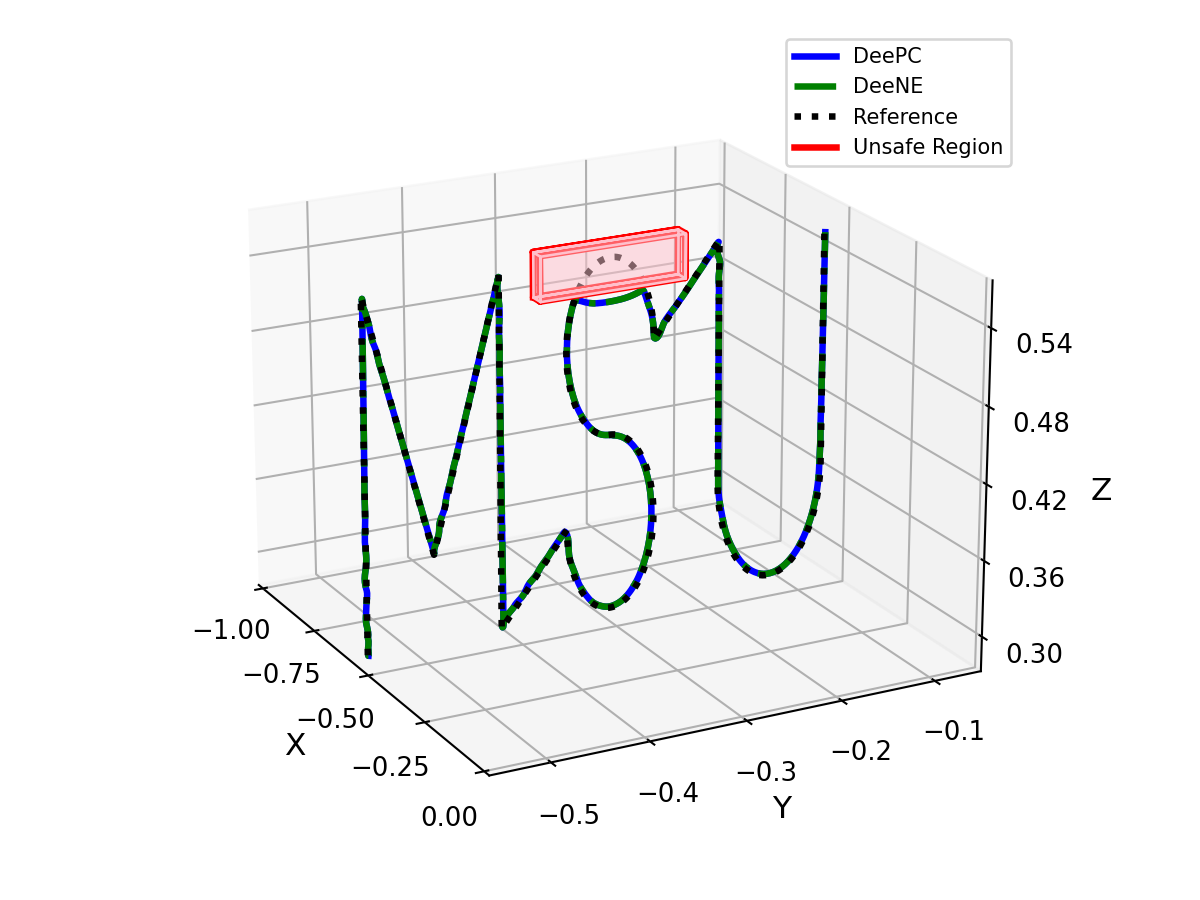}
    \caption{Safe position tracking for 7-DoF Robotic Arm (Experiment).}
     \label{Outputs3D_Safe}
 \end{figure}
 
 \begin{figure}[!h]
     \centering
     \includegraphics[width=0.99\linewidth]{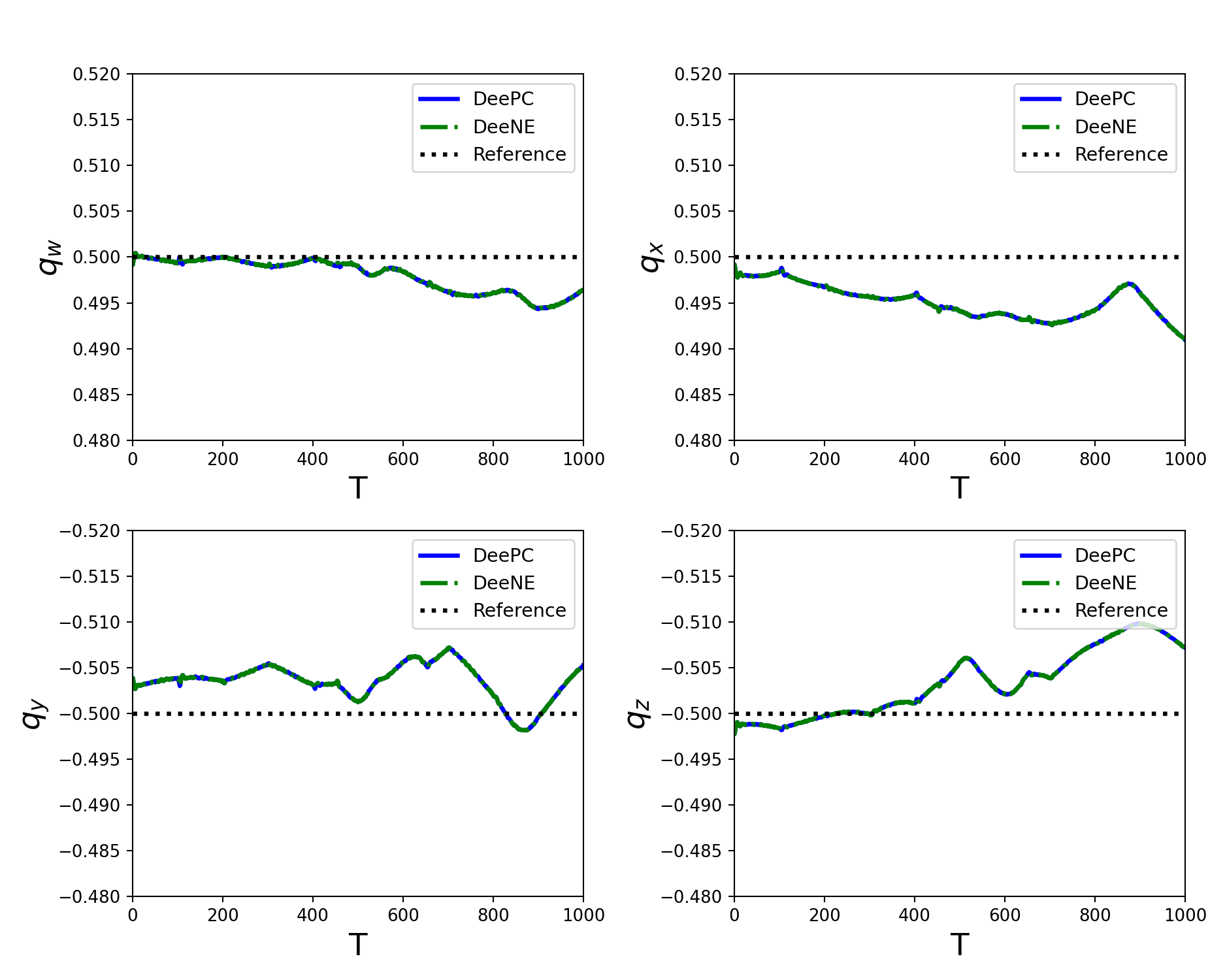}
    \caption{Safe orientation tracking for 7-DoF Robotic Arm with a y-axis scale increment of 0.005 (Experiment).}
     \label{Outputs_Safe}
 \end{figure}
 
\vspace{-5 pt}
\begin{table}[!ht]
\centering
 \caption{Comparison of Performance and Computational Cost for DeePC and DeeNE with safety guarantees}
\begin{tabular}{ |p{2.3cm}|p{2.3cm}|p{2.3cm}|  }
\hline
\hline
Controller & RMSE & Time (per loop) \\
\hline
DeePC & $1.48 \hspace{1 mm} cm$ & $20.08 \hspace{1 mm} ms$ \\\hline
%SVD-based DeePC & $22.1485$ & $0.06 \hspace{1 mm} s$ \\\hline
DeeNE & $1.49 \hspace{1 mm} cm$ & $3.09 \hspace{1 mm} ms$ \\\hline
%SVD-based DeeNE & $23.5335$ & $0.01 \hspace{1 mm} s$ \\\hline
\hline
\end{tabular}
\end{table}

\section{Conclusion}
\label{Sec6}
This paper presented a novel approach to improving the computational efficiency of DeePC for trajectory tracking tasks. Specifically, DeeNE was developed to optimally correct/approximate the DeePC policy in the presence of I/O and reference trajectory perturbations. The developed DeeNE was based on the second-order variation of the original DeePC problem such that its computational load grows linearly with the optimization horizon. This control approach alleviates the online computational burden and extends the applicability of DeePC in many real-time systems. Simulation and experimental verifications on the 7-DoF robotic arm demonstrated the DeeNE's substantial computation saving over the DeePC, while retaining similar performance. Future work will involve the integration of dimension reduction techniques into DeeNE to further improve the computation efficiency.

\bibliographystyle{ieeetr}
\bibliography{References.bib}
\end{document}